\newcommand{\ra}[1]{\renewcommand{\arraystretch}{#1}}
\newcommand{\iid}{\textsc{iid}\xspace}
\newcommand{\erm}{\textsc{erm}\xspace}
\newcommand{\scm}{\textsc{scm}\xspace}
\newcommand{\frl}{\textsc{frl}\xspace}
\newcommand{\indep}[1][]{\mathrel{{\perp\mkern-11mu\perp}^{#1}}}
\newcommand{\notindep}[1][]{\mathrel{{\centernot{\indep}}^{#1}}}
\newcommand*{\defeq}{\mathrel{\vcenter{\baselineskip0.5ex \lineskiplimit0pt
\hbox{\scriptsize.}\hbox{\scriptsize.}}}%
=}
\theoremstyle{plain}
\theoremstyle{definition}
\theoremstyle{remark}
\title{Rethinking Fair Representation Learning\\ for Performance-Sensitive Tasks}
\author{Charles Jones\textsuperscript{1,\textdagger}, Fabio De Sousa Ribeiro\textsuperscript{1}, M\'elanie Roschewitz\textsuperscript{1},\\ \textbf{Daniel C. Castro\textsuperscript{2} \& Ben Glocker}\textsuperscript{1,\textdagger} 
\\
\textsuperscript{1}Department of Computing, Imperial College London, UK \\ \textsuperscript{2}Microsoft Research Health Futures, Cambridge, UK\\
\textsuperscript{\textdagger}Correspondence: \texttt{\{charles.jones17,b.glocker\}@imperial.ac.uk}
}
\begin{document}

\maketitle

\begin{abstract}
We investigate the prominent class of fair representation learning methods for bias mitigation. Using causal reasoning to define and formalise different sources of dataset bias, we reveal important implicit assumptions inherent to these methods. We prove fundamental limitations on fair representation learning when evaluation data is drawn from the same distribution as training data and run experiments across a range of medical modalities to examine the performance of fair representation learning under distribution shifts. Our results explain apparent contradictions in the existing literature and reveal how rarely considered causal and statistical aspects of the underlying data affect the validity of fair representation learning. We raise doubts about current evaluation practices and the applicability of fair representation learning methods in performance-sensitive settings. We argue that fine-grained analysis of dataset biases should play a key role in the field moving forward.
\end{abstract}

\section{Introduction}
\label{sec:intro}

If we wish to deploy deep predictive models in high-stakes settings, such as medical diagnosis, we must understand and mitigate performance disparities across population subgroups \citep{buolamwiniGenderShadesIntersectional2018,seyyed-kalantariUnderdiagnosisBiasArtificial2021}. Despite considerable effort in developing methods for debiasing representations of deep models, little progress has been made towards understanding the validity of such methods for real-world deployment. Proposed methods often achieve state-of-the-art results on one benchmark, only to be beaten by conventional empirical risk minimisation \citep[\erm;][]{vapnikOverviewStatisticalLearning1999} on more comprehensive evaluations \citep{zietlowLevelingComputerVision2022,zongMEDFAIRBenchmarkingFairness2023}. Further analyses have shown a concerning `levelling down' effect \citep{mittelstadtUnfairnessFairMachine2023}, indicating that today's group fairness methods may even cause harm if deployed in the real world.

One aspect behind the apparent failure of fairness methods is an inconsistent approach to model evaluation. One prominent approach focuses on maximising subgroup performance for test data that are independent and identically distributed (\iid) to training data, effectively ignoring dataset bias and treating fairness as a learning problem \citep[e.g.][]{zietlowLevelingComputerVision2022,duttFairTuneOptimizingParameter2023}. A second approach assumes that training data includes known spurious correlations and seeks to generalise to an out-of-distribution test set with the bias removed \citep[e.g.][]{kimLearningNotLearn2019,tartaglioneEnDEntanglingDisentangling2021}. A third approach even ignores absolute performance entirely, aiming instead to enforce relative equality of properties such as predicted positive \citep{zemelLearningFairRepresentations2013} or true positive \citep{hardtEqualityOpportunitySupervised2016} rates. 

These three branches of research represent fundamentally different paradigms of fairness analysis; they make different ethical assumptions and require different methods, metrics, and benchmarks. Concerningly, however, much work leaves the distinction between these approaches implicit, and we often see methods from one paradigm employed (potentially inappropriately) in others. Specifically, we will consider the prominent class of \textit{fair representation learning} methods \citep[\frl;][]{zemelLearningFairRepresentations2013,cerrato202410yearsfairrepresentations}, which aim to remove sensitive information from learned representations. These methods were initially developed to enforce the demographic parity metric (i.e. enforcing an equal proportion of positive predictions in each group) but have since been applied in settings focusing on maximising \iid performance \citep[e.g.][]{pfohlEmpiricalCharacterizationFair2021,zhangImprovingFairnessChest2022}, or overcoming distribution shifts \citep[e.g.][]{kimLearningNotLearn2019,wangFairnessVisualRecognition2020}, with mixed results.

We apply tools from causal reasoning \citep{pearlCausalityModelsReasoning2011} to clarify the distinctions between different paradigms in fairness analysis. We analyse implicit assumptions harming the validity of \frl methods when applied outside of the settings they were designed for, deriving theoretical results that explain apparent contradictions in the existing literature. Our results indicate that bias mitigation methods must be clearer about their assumptions and limitations, and we call on the community to be explicit about what problems the proposed methods aim to solve. Our contributions are:
\begin{itemize}
    \item[\cref{sec:background}] We provide a unifying perspective on the fairness literature by organising relevant work into three parallel streams, each representing different methodological and evaluation paradigms.
    \item[\cref{sec:causalstructures}] We define causal structures representing realistic scenarios of dataset bias and discuss how the bias mechanisms may affect the performance and fairness of predictive models.
    \item[\cref{sec:rethinkingfrl}] We prove fundamental limitations on the validity of \frl methods when applied in \iid settings and propose two hypotheses for the validity of \frl under distribution shift.
    \item[\cref{sec:experimentsandresults}] We support our theoretical results and hypotheses with a comprehensive set of real-world experiments and discuss the implications of our results for the field moving forward.
\end{itemize}

\section{Three paradigms of group fairness analysis} \label{sec:background}

We begin by introducing three distinct paradigms of group fairness analysis from the literature, detailing how \frl methods have been applied in each one. Note that we do not make claims about the legitimacy or appropriateness of each paradigm -- such decisions must be made with ethical knowledge of the application domain \citep{fazelpourAlgorithmicFairnessSituated2022,mccraddenWhatFairFair2023}. By organising the relevant literature into these three paradigms, we aim to clarify the consequences of (mis)applying \frl methods outside of the problems they were initially developed for.

\paragraph{Enforcing group parity} Some of the earliest and most influential research in fair machine learning focuses on enforcing equality of classifier properties across subgroups. This is the context in which \citet{zemelLearningFairRepresentations2013} introduced \frl, a training strategy which prevents models from encoding sensitive information in their representations. In high-dimensional deep learning problems, \frl is typically implemented through either adversarial training \citep{edwardsCensoringRepresentationsAdversary2016,alviTurningBlindEye2018} or by applying disentanglement techniques \citep{creagerFlexiblyFairRepresentation2019,sarhanFairnessLearningOrthogonal2020}. Variants of \frl have been applied in both supervised and unsupervised \citep{louizosVariationalFairAutoencoder2017} settings to enforce demographic parity on downstream predictive tasks \citep{madrasLearningAdversariallyFair2018a}. In the supervised case, \frl may be class-conditional \citep{zhaoConditionalLearningFair2019}, corresponding to the equalised odds criterion \citep{hardtEqualityOpportunitySupervised2016} instead. Beyond \frl, a large body of work in this paradigm focuses on understanding tradeoffs between group fairness metrics such as equal opportunity, calibration, and demographic parity \citep{friedlerImPossibilityFairness2016,kleinbergInherentTradeOffsFair2016,chouldechovaFairPredictionDisparate2017,kimFACTDiagnosticGroup2020}. 

\paragraph{Maximising (subgroup-wise) \iid performance} 

A notable aspect of the group parity paradigm is that equality is often achieved by worsening performance for some (or all) groups  \citep[`levelling down';][]{wachterBiasPreservationMachine2021,zietlowLevelingComputerVision2022,mittelstadtUnfairnessFairMachine2023}, which is likely unacceptable in performance-sensitive domains, such as medical diagnosis \citep{petersenAreDemographicallyInvariant2023,wengIntentionalApproachManaging2024}. In such fields, we have seen a shift from considering fairness as a question of group parity to a goal of maximising performance for all groups \citep{martinezMinimaxParetoFairness2020,dianaMinimaxGroupFairness2021}. Considerable effort has been made in applying \frl methods to this setting but with limited success. 
\citet{zhaoInherentTradeoffsLearning2019,mcnamaraCostsBenefitsFair2019,zhaoFundamentalLimitsTradeoffs2022} derive various negative theoretical results for the performance of \frl on \iid tasks. While these results have been known for some time, there seems to remain confusion on this point in the literature, and we have seen repeated attempts to apply \frl in \iid settings. Empirically, \citet{pfohlEmpiricalCharacterizationFair2021,zhangImprovingFairnessChest2022,zietlowLevelingComputerVision2022,zongMEDFAIRBenchmarkingFairness2023} benchmark various \frl methods under \iid assumptions, finding that they consistently underperform compared to \erm or alternative bias mitigation techniques.

\paragraph{Generalising to unbiased distributions} In the \iid setting, any bias present in the training must also appear in the test set. Thus, maximising test-time performance may be undesirable, as it will likely encourage models to reflect whatever bias we were initially trying to remove. The third paradigm of research thus views fairness as a problem of generalising from a biased training dataset to an unbiased deployment setting \citep{kimLearningNotLearn2019,tartaglioneEnDEntanglingDisentangling2021,wangFairnessVisualRecognition2020}. In this context, fairness and distribution shift are two sides of the same coin -- a fair model, by definition, seeks to maximise subgroup-wise performance when generalising to an unbiased test set. This branch of work lends itself particularly well to causal analysis, which provides a unifying language for understanding shifts across groups and settings \citep{pearlTransportabilityCausalStatistical2011,castroCausalityMattersMedical2020}. \citet{wachterBiasPreservationMachine2021}, \citet{anthisCausalContextConnects2023}, and \citet{jonesCausalPerspectiveDataset2024} connect distribution shifts to assumed causal and ethical properties of the underlying data-generating process, relating causal notions of fairness \citep{kusnerCounterfactualFairness2017a,chiappaPathSpecificCounterfactualFairness2019,plecko2022causal} to existing work in group fairness and robustness. \citet{singhFairnessViolationsMitigation2021} and \citet{schrouffDiagnosingFailuresFairness2022} further study properties of fair classifiers under specific distribution shifts, with \citet{makarFairnessRobustnessAnticausal2022} demonstrating that fairness and robustness may be in alignment under some assumptions on the causal structure of the data. 

The group parity paradigm considers fairness as something that can be traded off for performance, whereas the latter two paradigms consider fairness as aligned with maximising subgroup-wise performance on a given test set (either \iid or unbiased). For this reason, we will refer to the latter paradigms as \textit{performance-sensitive}. In this work, we ask a simple question: are \frl methods (which were developed for the group parity paradigm) valid when applied in performance-sensitive settings?

\section{Causal structures of dataset bias}  \label{sec:causalstructures}

 We now take a moment to define what we mean when we say that a dataset is biased. We consider classification problems where we have access to a training dataset of inputs $\mathbf{X}$, targets $Y$, and sensitive attributes $A$. The targets are a potentially noisy reflection of some unobserved underlying condition $Z$ (i.e. $Y \defeq Z$ when there is no label noise). Taking a causal interpretation, let $\mathfrak{C}_{\mathrm{tr}}$ be a structural causal model (\scm) representing the generative processes in the training dataset. Similarly, $\mathfrak{C}_{\mathrm{te}}$ is the \scm of the test dataset on which we want to make predictions. We focus on the task of learning a probabilistic model approximating the conditional test distribution $P^{\mathfrak{C}_{\mathrm{te}}}(Y \mid \textbf{X})$.

Fair representation learning is predicated on the idea that inputs may encode sensitive subgroup information that may be spuriously correlated with the targets. To express this distinction between task-specific and sensitive information, we need a richer description of our input vector. Following \citet{jiangInvariantTransportableRepresentations2022}, we consider $\mathbf{X}$ to be a random vector which may be partitioned into two random variables: $X_Z$, representing target-related features directly caused by $Z$; and $X_A$, representing features related to the sensitive attribute, directly caused by $A$.

By construction, $X_A$ is predictive of the sensitive attribute $A$, so we say it encodes \textit{sensitive information}. In high-dimensional problems, such as imaging, we may view $\{X_Z, X_A \}$ as high-level latent features that models may implicitly depend on when trained to predict $Y$ from $\mathbf{X}$. For instance, consider a skin lesion classification task where self-reported race is the sensitive attribute. Here, $X_Z$ may be the pixels representing the lesion, whereas $X_A$ may correspond to skin pigmentation. Importantly, the amount of sensitive information encoded in the inputs may vary across application domains due to differences in the $A \rightarrow X_A$ pathway. \citet{jonesRoleSubgroupSeparability2023a} refer to the ease with which $A$ may be predicted from $X_A$ as \textit{subgroup separability}, finding that performance degradation of \erm models under dataset bias is strongly affected by subgroup separability.

When the mapping from inputs to targets is consistent across groups, sensitive information is irrelevant for class prediction; we define such distributions as \textit{unbiased}. In our skin lesion example, the dataset would be unbiased if exploiting skin pigmentation information does not help performance on the lesion classification task. We formalise this notion of dataset bias in \cref{def:unbiaseddata}. 

\begin{restatable}[Unbiased distribution]{definition}{unbiaseddef} \label{def:unbiaseddata}
    The distribution induced by a structural causal model $\mathfrak{C}$ is unbiased if, given $X_Z$, sensitive information $X_A$ provides no information relevant to predicting $Y$\footnote{$\indep$ represents statistical independence, see \cref{sec:notation} for a table of notation.}:
\begin{align}
  Y \indep[\mathfrak{C}] X_A \mid X_Z
  \nonumber \iff \ P^{\mathfrak{C}}(Y \mid X_Z) = P^{\mathfrak{C}}(Y \mid X_Z, X_A).
\end{align}
\end{restatable}
Applying the graphical d-separation criterion \citep{vermaCausalNetworksSemantics1990}, we may derive three fundamental mechanisms of dataset bias that may violate \cref{def:unbiaseddata}, causing sensitive information to become spuriously correlated with the target \citep{jonesCausalPerspectiveDataset2024}. We illustrate the unbiased distribution in \cref{fig:biasmechanisms}a and highlight each of the three potential shortcuts in \cref{fig:biasmechanisms}\{b -- d\}.  

\begin{figure*}[ht]
    \centering
    \includegraphics[width=.99\textwidth]{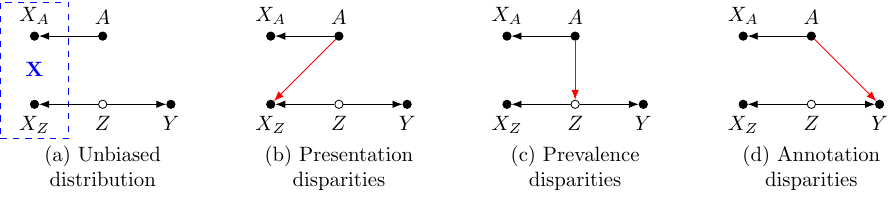}
    \caption{Causal structures of dataset bias in classification tasks. The input ${\mathbf{X}}$ is decomposed into latent features ${X_Z, X_A}$ based on their causal relationships with the sensitive attribute $A$ and (unobserved) underlying class $Z$. In the unbiased setting (a), sensitive information is irrelevant to predicting the target $Y$. This condition may be violated by (b) feature entanglement of $A$ and $Z$, (c) differences in base rates across subgroups, or (d) differences in labelling policy across subgroups.}
    \label{fig:biasmechanisms}
\end{figure*}

\cref{fig:biasmechanisms}b is a disparity in class \emph{presentation} ${\exists \ (a, a^*): P(X_Z \mid Z, a) \not = P(X_Z \mid Z, a^*)}$, where the same features encode sensitive and class-specific information. This is in contrast to disparities in class \emph{prevalence} ${\exists \ (a, a^*): P(Z \mid a) \not = P(Z \mid a^*)}$ illustrated in \cref{fig:biasmechanisms}c, where base rates shift across groups. Finally, \cref{fig:biasmechanisms}d represents disparities in \emph{annotation} ${\exists \ (a, a^*): P(Y \mid Z, a) \not = P(Y \mid Z, a^*)}$, where different groups are labelled with different policies. 
These structures represent realistic sources of bias, with \citet{jonesCausalPerspectiveDataset2024} discussing extensively how each may occur naturally in medical imaging scenarios. We provide further background and discussion, with a brief worked example of each mechanism in Appendix~\cref{sec:example_bias_mechanism}.

We will assume in this paper that the disparities in \cref{fig:biasmechanisms}\{b--d\} are spurious and should be mitigated. However, any of the mechanisms in \cref{fig:biasmechanisms} may constitute fair or unfair situations in the real world \citep{chiappaPathSpecificCounterfactualFairness2019}. For example, in medical imaging, disease prevalence and presentation may legitimately vary across populations \citep{mccraddenWhatFairFair2023} due to known physiological mechanisms. In practice, a domain expert would need to determine the fairness of each situation.

\section{Rethinking fair representations} \label{sec:rethinkingfrl}

Motivated by our causal formulation of dataset bias, we take a detailed look at the limits of fair representations from the perspective of performance-sensitive fairness paradigms. Let's begin by recalling from \citet{zemelLearningFairRepresentations2013} that the stated aim of fair representation learning is to 
\begin{displayquote}
\textit{``lose any information that can identify whether the person belongs to
the protected subgroup, while retaining as much other
information as possible''}. 
\end{displayquote}
We will refer to the first part of this goal as \textit{effectiveness} -- is \frl effective at removing sensitive information that would have been encoded by \erm? The second part will be called \textit{harmlessness} -- does \frl avoid harming performance by retaining task-relevant information? We begin by proving that fair representations cannot be both effective and harmless if test data is \iid to training data. 

Notably, our results follow from our causal setup in \cref{sec:causalstructures}, showing how a causal approach helps to clarify complex issues in bias and fairness. We do not presuppose any architecture or implementation for the classifiers. Nor do we make assumptions about the functional mechanisms in the underlying \scm. We scrutinise the objective of learning fair representations through the lens of implied conditional independence relationships. By taking this approach, we focus on the underlying structure of the distribution being approximated, as opposed to the training dynamics of any specific model. We include a discussion of assumptions and proofs for all Lemmas in \cref{sec:proofs}.

\subsection{Futility in the \iid performance paradigm}

\paragraph{Preliminaries}
We consider models of the following form: a feature extractor $f_\theta$ mapping inputs to representations $R$, and a classifier which maps representations to predictions. Both components are typically implemented as (deep) neural networks. Fair representation learning imposes the train-time constraint that fair representations $R^\mathrm{FRL}$ must be (marginally) independent of the sensitive attribute, denoted as $R^\mathrm{FRL} \indep[\mathfrak{C}_{\mathrm{tr}}] A$, leading to a predictor satisfying demographic parity. We contrast this to the unconstrained \erm strategy (i.e. learning $R^\mathrm{ERM}$). While $f_\theta$ is always a function of $\mathbf{X}$ (i.e. the feature extractor takes the whole of $\mathbf{X}$ as input), we will slightly abuse the notation $f_\theta(\mathbf{X}^*)$ to indicate that the feature extractor is only non-constant w.r.t. some subset $\mathbf{X}^*$ of $\mathbf{X}$.

\begin{restatable}[]{assumption}{compactnessassumption} \label{assumption:compactness}
Unconstrained representations depend on input features $\mathbf{X}^* \subseteq \mathbf{X}$ iff they form a Markov blanket over $Y$ at train-time:
\begin{equation} \label{eq:compactnessassumption}
    R^\mathrm{ERM} = f_\theta(\mathbf{X}^*) \iff Y \indep[\mathfrak{C}_{\mathrm{tr}}] (\mathbf{X} \setminus \mathbf{X}^*) \mid \mathbf{X}^* .
\end{equation}
\end{restatable}
The Markov blanket contains all information sufficient to predict $Y$ in an idealised (infinite-sample) setting \citep[][Chapter 6]{petersElementsCausalInference2017}. We may view $\mathbf{X}^*$ as a sufficient statistic for predicting $Y$; hence \cref{assumption:compactness} is closely related to the information bottleneck principle \citep{tishbyInformationBottleneckMethod2000}, which stipulates that representations should be minimal and sufficient for predicting $Y$. Intuitively speaking, \cref{assumption:compactness} states that a properly trained \erm model encodes relevant information in its representations whilst ignoring irrelevant information.

\begin{restatable}[]{lemma}{frllemma} \label{lemma:fairrepresentations}
    Fair representations must depend on $X_Z$ only: 
\begin{equation} \label{eq:fairrepresentationlemma}
    R^\mathrm{FRL} \indep[\mathfrak{C}_{\mathrm{tr}}] A \implies R^\mathrm{FRL} = f_\theta(X_Z) .
\end{equation}
\end{restatable}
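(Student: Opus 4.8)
The plan is to use the decomposition $\mathbf{X} = \{X_Z, X_A\}$ from \cref{sec:causalstructures} and to show that the \frl constraint $R^{\mathrm{FRL}} \indep[\mathfrak{C}_{\mathrm{tr}}] A$ forces the feature extractor to be constant in its $X_A$ argument; by the convention fixed in the preliminaries (writing $f_\theta(\mathbf{X}^*)$ when $f_\theta$ is non-constant only w.r.t.\ $\mathbf{X}^*$), this is exactly the statement $R^{\mathrm{FRL}} = f_\theta(X_Z)$, and in particular the degenerate constant representation is covered. Note this lemma does not need \cref{assumption:compactness}, which concerns $R^{\mathrm{ERM}}$; the only structural inputs are that $X_A$ is a child of $A$ in $\mathfrak{C}_{\mathrm{tr}}$ and that, by construction, $X_A$ is predictive of $A$, i.e.\ $X_A \notindep[\mathfrak{C}_{\mathrm{tr}}] A$, together with the fact that $R^{\mathrm{FRL}}=f_\theta(\mathbf{X})$ is a function of the features alone (never of $A$ directly).

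I would argue the contrapositive. Suppose $f_\theta$ is genuinely non-constant in its $X_A$ argument on a positive-probability set. Adjoin $R^{\mathrm{FRL}}$ to the causal graph of $\mathfrak{C}_{\mathrm{tr}}$ as a node whose parents are precisely the features $f_\theta$ depends on; by assumption $X_A$ is one of them. Then the path $A \to X_A \to R^{\mathrm{FRL}}$ is present, contains no collider, and is not blocked, since we are in the marginal (unconditioned) setting. Hence $A$ and $R^{\mathrm{FRL}}$ are d-connected, which I want to upgrade to $R^{\mathrm{FRL}} \notindep[\mathfrak{C}_{\mathrm{tr}}] A$, contradicting the \frl constraint and forcing $f_\theta$ to ignore $X_A$, i.e.\ $R^{\mathrm{FRL}} = f_\theta(X_Z)$.

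The delicate step — and the main obstacle — is turning d-connection into genuine statistical dependence. In the biased structures of \cref{fig:biasmechanisms}b,c the attribute $A$ is also an ancestor of $X_Z$, so if $f_\theta$ additionally depends on $X_Z$ there is a second active $A$-to-$R^{\mathrm{FRL}}$ path, and a priori the two contributions could cancel, leaving $R^{\mathrm{FRL}}$ marginally independent of $A$ while still depending on $X_A$. To exclude this I would invoke the regularity/faithfulness-type condition accompanying the lemma in \cref{sec:proofs}: under it, the independence $R^{\mathrm{FRL}} \indep[\mathfrak{C}_{\mathrm{tr}}] A$ cannot be an accident of the functional parameters, so no such cancellation occurs and the d-separation reading is valid. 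For the transparent special case $X_Z \indep[\mathfrak{C}_{\mathrm{tr}}] A$ (e.g.\ the annotation-shift structure of \cref{fig:biasmechanisms}d, and the unbiased case) no extra assumption is needed: conditioning on a positive-probability value of $X_Z$ we still have $X_A \notindep[\mathfrak{C}_{\mathrm{tr}}] A \mid X_Z$, and since $R^{\mathrm{FRL}}$ is a deterministic function of $(X_Z, X_A)$ a direct change-of-measure computation shows $P^{\mathfrak{C}_{\mathrm{tr}}}(R^{\mathrm{FRL}} \mid A=a, X_Z)$ varies with $a$, hence so does the marginal, completing the contradiction. I would present the general argument and flag this sub-case as its clean core.
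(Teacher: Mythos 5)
Your argument is, at its core, the same as the paper's: the paper's proof is just the one-line chain $R^\mathrm{FRL} \indep[\mathfrak{C}_{\mathrm{tr}}] A \implies X_A \notin \mathbf{X}^* \implies R^\mathrm{FRL} = f_\theta(X_Z)$, justified solely by the remark that $X_A \notindep A$ by construction; you prove the contrapositive of the first implication. What you add is a more honest accounting of why that implication holds: you correctly identify that upgrading ``$f_\theta$ is non-constant in $X_A$'' to ``$R^\mathrm{FRL} \notindep[\mathfrak{C}_{\mathrm{tr}}] A$'' requires excluding cancellation between the $A \to X_A \to R^\mathrm{FRL}$ path and the $A$-to-$X_Z$ paths present in the presentation and prevalence structures, which the paper glosses over entirely. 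Two caveats. First, you ``invoke the regularity/faithfulness-type condition accompanying the lemma in \cref{sec:proofs}'' --- no such condition is stated anywhere in the paper; the discussion there concerns only \cref{assumption:compactness} (which, as you rightly note, is not needed for this lemma). You are appealing to an assumption that does not exist in the text, and if you want the lemma airtight you must state it yourself. Second, your claimed assumption-free sub-case ($X_Z \indep[\mathfrak{C}_{\mathrm{tr}}] A$) is not actually assumption-free: a feature extractor can be genuinely non-constant in $X_A$ yet produce an output independent of $A$ by discarding exactly the $A$-informative components of $X_A$ (take $X_A = (A, N)$ with $f_\theta$ reading only $N$), so $X_A \notindep[\mathfrak{C}_{\mathrm{tr}}] A \mid X_Z$ does not by itself force $P^{\mathfrak{C}_{\mathrm{tr}}}(R^\mathrm{FRL} \mid A = a, X_Z)$ to vary with $a$. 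The same faithfulness-style caveat is needed there too --- as it is, implicitly, in the paper's own one-liner. In short: same route as the paper, with a genuinely sharper diagnosis of the hidden regularity assumption, but the ``no extra assumption needed'' claim for the sub-case overreaches.
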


\begin{restatable}[]{lemma}{ermlemma} \label{lemma:unconstrainedrepresentations}
    Unconstrained representations are fair iff the training distribution is unbiased:
\begin{equation} \label{eq:unconstrainedrepresentationslemma}
      R^\mathrm{ERM} \indep[\mathfrak{C}_{\mathrm{tr}}] A \iff Y \indep[\mathfrak{C}_{\mathrm{tr}}] X_A \mid X_Z.
\end{equation}
\end{restatable}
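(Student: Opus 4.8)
The plan is to prove the biconditional by threading two intermediate equivalences,
\[
  R^{\mathrm{ERM}} \indep[\mathfrak{C}_{\mathrm{tr}}] A
  \iff R^{\mathrm{ERM}} = f_\theta(X_Z)
  \iff Y \indep[\mathfrak{C}_{\mathrm{tr}}] X_A \mid X_Z ,
\]
where the right-hand one is handed to us directly by \cref{assumption:compactness}. Indeed, since $\mathbf{X} = \{X_Z, X_A\}$, instantiating \cref{eq:compactnessassumption} at $\mathbf{X}^* = X_Z$ (so that $\mathbf{X} \setminus \mathbf{X}^* = X_A$) is \emph{literally} the statement $R^{\mathrm{ERM}} = f_\theta(X_Z) \iff Y \indep[\mathfrak{C}_{\mathrm{tr}}] X_A \mid X_Z$. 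All the remaining work lies in the left-hand equivalence, which relates ``$R^{\mathrm{ERM}}$ depends only on the target-related features'' to ``$R^{\mathrm{ERM}}$ is fair''.

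For the forward part of that equivalence, $R^{\mathrm{ERM}} \indep[\mathfrak{C}_{\mathrm{tr}}] A \Rightarrow R^{\mathrm{ERM}} = f_\theta(X_Z)$, I would apply \cref{lemma:fairrepresentations} to the representation $R^{\mathrm{ERM}}$ itself: the conclusion of that lemma is derived from nothing beyond the premise that the representation is a function of $\mathbf{X}$ satisfying the marginal constraint $\indep[\mathfrak{C}_{\mathrm{tr}}] A$ --- exactly what we are assuming here --- so it carries over unchanged from $R^{\mathrm{FRL}}$ to $R^{\mathrm{ERM}}$.

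For the backward part, $R^{\mathrm{ERM}} = f_\theta(X_Z) \Rightarrow R^{\mathrm{ERM}} \indep[\mathfrak{C}_{\mathrm{tr}}] A$: by the right-hand equivalence already established, $R^{\mathrm{ERM}} = f_\theta(X_Z)$ forces the training distribution to be unbiased, so among the structures in \cref{fig:biasmechanisms} its generative process must be the one in \cref{fig:biasmechanisms}a. None of the bias-inducing edges $A \to X_Z$, $A \to Z$, $A \to Y$ of \cref{fig:biasmechanisms}\{b--d\} can be present, because each one opens a path from $X_A$ to $Y$ that stays active after conditioning on $X_Z$ (for $A \to X_Z$, through the collider at $X_Z$), which would contradict \cref{def:unbiaseddata}. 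With all three absent, $A$ and $X_Z$ are d-separated, hence $X_Z \indep[\mathfrak{C}_{\mathrm{tr}}] A$, and therefore the function $R^{\mathrm{ERM}} = f_\theta(X_Z)$ is independent of $A$. Chaining the three equivalences gives the lemma.

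The step I expect to be the main obstacle is precisely this last one --- passing from ``the distribution is unbiased'', a statement about conditional independence (\cref{def:unbiaseddata}), to the graphical fact $X_Z \indep[\mathfrak{C}_{\mathrm{tr}}] A$. Making it airtight needs the same faithfulness-type reasoning that licenses reading the three bias mechanisms off the graph in the first place: one must check that each bias edge genuinely activates the claimed active path between $X_A$ and $Y$, and that, within the family of \cref{fig:biasmechanisms}, the simultaneous absence of all three edges is the \emph{only} way \cref{def:unbiaseddata} can hold. I would write this correspondence out explicitly rather than leave it implicit. A minor, essentially bookkeeping, point is justifying the transfer of \cref{lemma:fairrepresentations} from $R^{\mathrm{FRL}}$ to $R^{\mathrm{ERM}}$, which is harmless because the lemma's sole hypothesis is the marginal independence we have assumed.
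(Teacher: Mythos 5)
Your proposal is correct and follows essentially the same route as the paper: the paper's proof chains the identical three equivalences ($Y \indep[\mathfrak{C}_{\mathrm{tr}}] X_A \mid X_Z \iff R^\mathrm{ERM} = f_\theta(X_Z) \iff R^\mathrm{ERM} \indep[\mathfrak{C}_{\mathrm{tr}}] A$), obtaining the first from \cref{assumption:compactness} exactly as you do. The only difference is that the paper asserts the final equivalence in one line, whereas you spell out both directions --- including the faithfulness-type step needed to get $X_Z \indep[\mathfrak{C}_{\mathrm{tr}}] A$ from unbiasedness --- which is a point the paper leaves implicit.
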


We now take an information-theoretic perspective to define our two desiderata for fair representations: effectiveness (\cref{def:effectiveness}), and harmlessness (\cref{def:harmlessness}). While both properties are intuitive and desirable, we show how they each imply constraints on the training and testing distributions in \cref{lemma:effectiveness,lemma:harmlessness}, respectively. By showing that these constraints are incompatible when the distributions coincide, we derive our futility result for \iid settings (\cref{theorem:iidfutility}). We denote $I^{\mathfrak{C}}(\cdot ; \cdot)$ the mutual information between random variables in the distribution induced by $\mathfrak{C}$.

\begin{restatable}[Effectiveness]{definition}{effectivenessdef} \label{def:effectiveness}
    Fair representations are effective if, at train-time, they do not encode sensitive information that unconstrained representations would encode:
\begin{equation}\label{eq:effectiveness}
   % P^{\mathfrak{C}_{\mathrm{tr}}}(A \mid R^\mathrm{FRL}) = P^{\mathfrak{C}_{\mathrm{tr}}}(A) \not = P^{\mathfrak{C}_{\mathrm{tr}}}(A \mid R^{\mathrm{ERM}}) .
   I^{\mathfrak{C}_{\mathrm{tr}}}(A ; R^{\mathrm{ERM}}) > I^{\mathfrak{C}_{\mathrm{tr}}}(A ; R^\mathrm{FRL}) = 0  .
   % \\ \implies R^\mathrm{ERM} \notindep A
\end{equation} 
\end{restatable}
\begin{restatable}[Harmlessness]{definition}{harmlessnessdef} \label{def:harmlessness}
    Fair representations are harmless if, at test-time, they have equal information relevant to predicting the targets as the input (i.e. they do not discard relevant information).
    % there is no performance penalty on downstream prediction of the targets:
\begin{equation} \label{eq:harmlessness}
   % P^{\mathfrak{C}_{\mathrm{te}}}(Y \mid R^\mathrm{FRL}) = P^{\mathfrak{C}_{\mathrm{te}}}(Y \mid X_Z, X_{A \land Z}, X_A ) .
   I^{\mathfrak{C}_{\mathrm{te}}}(Y ; R^\mathrm{FRL}) = I^{\mathfrak{C}_{\mathrm{te}}}(Y ; X_Z, X_A ) .
\end{equation}
\end{restatable}

\begin{restatable}[]{lemma}{effectivenesslemma} \label{lemma:effectiveness} Effectiveness ($\mathcal{E}$) implies bias at train-time:
\begin{equation}
    \mathcal{E} \implies Y \notindep[\mathfrak{C}_{\mathrm{tr}}] X_A \mid X_Z .
\end{equation}
\end{restatable}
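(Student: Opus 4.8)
The plan is to establish the contrapositive: if the training distribution is unbiased, that is if $Y \indep[\mathfrak{C}_{\mathrm{tr}}] X_A \mid X_Z$ holds, then effectiveness $\mathcal{E}$ fails. Note first that this hypothesis is exactly \cref{def:unbiaseddata} instantiated at the training \scm\ $\mathfrak{C}_{\mathrm{tr}}$, so the whole argument takes place inside $\mathfrak{C}_{\mathrm{tr}}$ and no reasoning about the specific bias mechanisms of \cref{fig:biasmechanisms} is required.

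The key step is to apply \cref{lemma:unconstrainedrepresentations}: the right-hand side of \eqref{eq:unconstrainedrepresentationslemma}, namely $Y \indep[\mathfrak{C}_{\mathrm{tr}}] X_A \mid X_Z$, is precisely our hypothesis, so the lemma yields $R^{\mathrm{ERM}} \indep[\mathfrak{C}_{\mathrm{tr}}] A$. I would then invoke the elementary property that mutual information is non-negative and equals zero if and only if its arguments are independent; hence $I^{\mathfrak{C}_{\mathrm{tr}}}(A ; R^{\mathrm{ERM}}) = 0$. Comparing with \cref{def:effectiveness}, which demands $I^{\mathfrak{C}_{\mathrm{tr}}}(A ; R^{\mathrm{ERM}}) > I^{\mathfrak{C}_{\mathrm{tr}}}(A ; R^{\mathrm{FRL}}) = 0$, substitution gives the impossible inequality $0 > 0$. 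Thus $\mathcal{E}$ cannot hold under the hypothesis, which is the desired contrapositive, and the lemma follows.

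I do not anticipate a substantive obstacle: the content is entirely carried by \cref{lemma:unconstrainedrepresentations} together with the vanishing-mutual-information characterisation of independence. The only points requiring care are the bookkeeping of which model each statement is evaluated in (uniformly $\mathfrak{C}_{\mathrm{tr}}$), and the observation that the negation of the conclusion, $Y \notindep[\mathfrak{C}_{\mathrm{tr}}] X_A \mid X_Z$, is literally the failure of unbiasedness at $\mathfrak{C}_{\mathrm{tr}}$, so no gap remains between ``not biased'' and ``unbiased''.
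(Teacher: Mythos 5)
Your proof is correct and is essentially the paper's own argument stated in contrapositive form: both rest on \cref{lemma:unconstrainedrepresentations} together with the equivalence between vanishing mutual information and independence, the paper arguing directly that $\mathcal{E} \implies I^{\mathfrak{C}_{\mathrm{tr}}}(A;R^{\mathrm{ERM}})>0 \implies R^{\mathrm{ERM}} \notindep[\mathfrak{C}_{\mathrm{tr}}] A$ and then negating both sides of the biconditional. No gap; the two presentations are interchangeable.
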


Intuitively, \cref {lemma:unconstrainedrepresentations} implies that an unconstrained model will not encode sensitive information in its representations when trained on a dataset where that information is irrelevant for task prediction (i.e. unbiased according to \cref{def:unbiaseddata}). However, effectiveness (\cref{def:effectiveness}) requires that an unconstrained model does encode sensitive information in its representations -- else there would be no point removing it with \frl! Thus, \cref{lemma:effectiveness} follows, stating that \frl can only be effective if the training data is biased.

\begin{restatable}[]{lemma}{harmlessnesslemma} \label{lemma:harmlessness} Harmlessness ($\mathcal{H}$) implies no bias at test-time:
\begin{equation}
    \mathcal{H} \implies Y \indep[\mathfrak{C}_{\mathrm{te}}] X_A \mid X_Z .
\end{equation}
\end{restatable}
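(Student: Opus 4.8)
The plan is to chain together \cref{lemma:fairrepresentations}, the data processing inequality, and the chain rule for mutual information. By \cref{lemma:fairrepresentations}, the fairness constraint $R^\mathrm{FRL} \indep[\mathfrak{C}_{\mathrm{tr}}] A$ forces $R^\mathrm{FRL} = f_\theta(X_Z)$, i.e. the fair representation is a deterministic function of $X_Z$ alone. I would first note that this is a property of the fixed feature extractor $f_\theta$ (it is non-constant only w.r.t.\ $X_Z$), not a distribution-specific statement, so it continues to hold under $\mathfrak{C}_{\mathrm{te}}$: conditioning on $X_Z$ renders $R^\mathrm{FRL}$ constant, hence $Y - X_Z - R^\mathrm{FRL}$ is a Markov chain in the test distribution.

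Next I would run a short squeeze argument in $\mathfrak{C}_{\mathrm{te}}$. The Markov chain gives, via the data processing inequality, $I^{\mathfrak{C}_{\mathrm{te}}}(Y; R^\mathrm{FRL}) \le I^{\mathfrak{C}_{\mathrm{te}}}(Y; X_Z)$. Since conditional mutual information is nonnegative, $I^{\mathfrak{C}_{\mathrm{te}}}(Y; X_Z) \le I^{\mathfrak{C}_{\mathrm{te}}}(Y; X_Z, X_A)$. The harmlessness hypothesis $\mathcal{H}$ states that the two ends of this chain coincide: $I^{\mathfrak{C}_{\mathrm{te}}}(Y; R^\mathrm{FRL}) = I^{\mathfrak{C}_{\mathrm{te}}}(Y; X_Z, X_A)$. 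Therefore both inequalities are equalities; in particular $I^{\mathfrak{C}_{\mathrm{te}}}(Y; X_Z) = I^{\mathfrak{C}_{\mathrm{te}}}(Y; X_Z, X_A)$.

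Finally, applying the chain rule $I^{\mathfrak{C}_{\mathrm{te}}}(Y; X_Z, X_A) = I^{\mathfrak{C}_{\mathrm{te}}}(Y; X_Z) + I^{\mathfrak{C}_{\mathrm{te}}}(Y; X_A \mid X_Z)$ to the equality just obtained yields $I^{\mathfrak{C}_{\mathrm{te}}}(Y; X_A \mid X_Z) = 0$, which is precisely $Y \indep[\mathfrak{C}_{\mathrm{te}}] X_A \mid X_Z$, i.e.\ the test distribution is unbiased in the sense of \cref{def:unbiaseddata}.

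I expect the main point requiring care to be the transfer step: justifying that \cref{lemma:fairrepresentations}'s conclusion licenses the Markov chain $Y - X_Z - R^\mathrm{FRL}$ under $\mathfrak{C}_{\mathrm{te}}$, given that the lemma was derived in $\mathfrak{C}_{\mathrm{tr}}$. This rests on the convention (stated in the preliminaries) that $f_\theta(\mathbf{X}^*)$ denotes a feature extractor that is non-constant only w.r.t.\ $\mathbf{X}^*$, so the dependence structure of $R^\mathrm{FRL}$ on its inputs is fixed across distributions. Everything after that is routine information theory, so no further obstacle is anticipated.
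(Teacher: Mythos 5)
Your proposal is correct and follows essentially the same route as the paper's proof: invoke \cref{lemma:fairrepresentations} to get $R^\mathrm{FRL} = f_\theta(X_Z)$, apply the data processing inequality and the chain rule for mutual information, and conclude $I^{\mathfrak{C}_{\mathrm{te}}}(Y; X_A \mid X_Z) = 0$. Your explicit justification of the train-to-test transfer step (that $f_\theta$'s dependence only on $X_Z$ is a property of the fixed extractor, not of the distribution) is a point the paper leaves implicit, and is a welcome clarification rather than a deviation.
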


\cref{lemma:harmlessness} states that enforcing demographically invariant representations must lead to a performance penalty when testing on a biased (according to \cref{def:unbiaseddata}) dataset. This result is closely related to \citet{zhaoInherentTradeoffsLearning2019}, who relate the performance penalty under prevalence disparities to the difference in base rates across groups. Our result in \cref{lemma:harmlessness} does not attempt to derive any bounds on the performance penalty, but is more general. We show that there is a performance penalty when deploying \frl on \textit{any} dataset violating the unbiasedness condition in \cref{def:unbiaseddata}, including (but not limited to) the causal structures in \cref{fig:biasmechanisms}\{b--d\}.    

\begin{tcolorbox}[colback=white]
\begin{restatable}[Futility]{proposition}{futilitythm} \label{theorem:iidfutility}
    Fair representations may not be effective ($\mathcal{E}$) and harmless ($\mathcal{H}$) if the train and test datasets are identically distributed:
    \begin{equation}
        \mathcal{E} \land \mathcal{H} \implies P^{\mathfrak{C}_{\mathrm{tr}}} \not = P^{\mathfrak{C}_{\mathrm{te}}} .
    \end{equation}
\end{restatable}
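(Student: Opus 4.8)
The plan is to obtain the conclusion as an immediate consequence of \cref{lemma:effectiveness} and \cref{lemma:harmlessness}, which between them force incompatible conditional-independence structure on the training and test distributions. I would argue by contraposition: assume $\mathcal{E} \land \mathcal{H}$ and derive $P^{\mathfrak{C}_{\mathrm{tr}}} \neq P^{\mathfrak{C}_{\mathrm{te}}}$. From $\mathcal{E}$, \cref{lemma:effectiveness} gives that the training distribution is biased, $Y \notindep[\mathfrak{C}_{\mathrm{tr}}] X_A \mid X_Z$; from $\mathcal{H}$, \cref{lemma:harmlessness} gives that the test distribution is unbiased, $Y \indep[\mathfrak{C}_{\mathrm{te}}] X_A \mid X_Z$. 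Note that no further ingredients -- not even \cref{assumption:compactness} -- are needed at this stage, since those were already consumed in establishing the two lemmas.

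The key observation is that the predicate ``$Y \indep X_A \mid X_Z$'' is determined entirely by the joint law of $(Y, X_Z, X_A)$: it is a statement about a distribution, not about the mechanisms generating it. Therefore, if $P^{\mathfrak{C}_{\mathrm{tr}}} = P^{\mathfrak{C}_{\mathrm{te}}}$ -- in particular if the two \scm{}s induce the same marginal over $(Y, X_Z, X_A)$ -- then the conditional independence must either hold in both or fail in both. This contradicts the two facts just derived, so $P^{\mathfrak{C}_{\mathrm{tr}}} \neq P^{\mathfrak{C}_{\mathrm{te}}}$, which is exactly the claim. Equivalently, one can phrase this via \cref{def:unbiaseddata}: $\mathcal{E}$ says the training distribution fails the unbiasedness condition while $\mathcal{H}$ says the test distribution satisfies it, and a single distribution cannot do both.

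The only subtlety, and the closest thing to an obstacle, is ensuring that $X_Z$ and $X_A$ denote the same random variables across $\mathfrak{C}_{\mathrm{tr}}$ and $\mathfrak{C}_{\mathrm{te}}$, so that the two conditional-independence statements really do concern the same objects and can be placed in direct contradiction. This is guaranteed by the setup of \cref{sec:causalstructures}: the partition $\mathbf{X} = \{X_Z, X_A\}$ is defined structurally (via the causal parents $Z$ and $A$), and the task is framed as modelling $P^{\mathfrak{C}_{\mathrm{te}}}(Y \mid \mathbf{X})$ on the same input space as training. I would record this alignment explicitly at the start of the proof; after that, the argument is a short syllogism -- effectiveness forces train-time bias, harmlessness forbids test-time bias, and identical distributions cannot be simultaneously biased and unbiased.
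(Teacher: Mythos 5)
Your proposal is correct and follows essentially the same route as the paper's proof: combine \cref{lemma:effectiveness} and \cref{lemma:harmlessness} to obtain $Y \notindep[\mathfrak{C}_{\mathrm{tr}}] X_A \mid X_Z$ and $Y \indep[\mathfrak{C}_{\mathrm{te}}] X_A \mid X_Z$, which are contradictory when $P^{\mathfrak{C}_{\mathrm{tr}}} = P^{\mathfrak{C}_{\mathrm{te}}}$. Your added remark that the conditional-independence predicate depends only on the induced joint law, and that $X_Z, X_A$ must denote the same variables in both \scm{}s, is a careful explication of a step the paper leaves implicit, but it does not change the argument.
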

\begin{proof}
    Suppose, for the sake of contradiction, that we have \iid training and testing distributions $P ^{\mathfrak{C}_\mathrm{tr}}= P^{\mathfrak{C}_\mathrm{te}}$ and that effectiveness and harmlessness are satisfied. Substituting \cref{lemma:effectiveness,lemma:harmlessness}, we get that
    \begin{equation*}
        \mathcal{E} \land \mathcal{H} \implies (Y \notindep X_A \mid X_Z) \land (Y \indep X_A \mid X_Z),
    \end{equation*}
    which is a contradiction. \qedhere
\end{proof}
\end{tcolorbox}

We emphasise the importance of \cref{theorem:iidfutility}, given that performance-oriented \iid benchmarks persist in the literature. \textit{Fair representation learning is futile for performance-sensitive \iid tasks.} The strategy carries an implicit assumption that training data contains bias not present at test time. Intuitively, preventing a model from using information can only worsen performance unless the predictive power given by that information is entirely spurious and expected to disappear at test time. \cref{theorem:iidfutility} hence provides a theoretical explanation on why previous empirical studies benchmarking \frl methods in \iid settings did not find any consistent improvements over \erm methods~\citep{pfohlEmpiricalCharacterizationFair2021,zhangImprovingFairnessChest2022,zietlowLevelingComputerVision2022,zongMEDFAIRBenchmarkingFairness2023}. 

\subsection{Potential validity in the distribution shift paradigm}

\cref{theorem:iidfutility} demonstrates that \frl methods cannot be motivated by performance in \iid settings. We now turn our attention to whether \frl may benefit performance under distribution shift. This setting is more interesting, and today, contradictory empirical results exist in the literature. For example, \citet{kimLearningNotLearn2019} and \citet{tartaglioneEnDEntanglingDisentangling2021} demonstrate successes of \frl on simple colour-MNIST benchmarks. In contrast, \citet{wangFairnessVisualRecognition2020} find that \frl methods fail on the more complex CIFAR-S benchmark. Such results seem to indicate that the underlying structure of the dataset and the shift may affect the validity of \frl methods in the distribution shift paradigm.  

To minimise test-time risk under distribution shift, we need some notion of what information is stable across domains and what information is unstable or spurious \citep{petersCausalInferenceUsing2016,arjovskyInvariantRiskMinimization2019}. Revisiting \cref{fig:biasmechanisms}, notice that while the shortcut paths (red arrows) are unstable across domains, the ${Z \rightarrow X_Z}$ causal pathway is stable across all causal structures, and thus an encoder which depends only on $X_Z$ is necessary to transport from a biased training setting to an unbiased deployment setting \citep{jiangInvariantTransportableRepresentations2022,makarFairnessRobustnessAnticausal2022}. This is encouraging for \frl, as \cref{lemma:fairrepresentations} demonstrates that depending on $X_Z$ only is a necessary condition for fair representations. 

Crucially, however, \cref{lemma:fairrepresentations} is not a sufficient condition. There is no guarantee that enforcing $R^\mathrm{FRL} \indep[\mathfrak{C}_{\mathrm{tr}}] A$ is sufficient to learn an encoder which can recover faithful representations of $X_Z$ in all cases. Indeed, there is evidence in the existing literature that the validity of enforcing invariant representations is dependent on the underlying causal structure of the problem, with \citet{veitchCounterfactualInvarianceSpurious2021a} and \citet{makarFairnessRobustnessAnticausal2022} each proving results for the robustness of closely related methods under different causal structures of distribution shift. 

From a representation learning perspective, proving the validity of \frl would involve proving causal identifiability \citep{khemakhemVariationalAutoencodersNonlinear2020} of the $X_Z$ feature, which is challenging in the general case~\citep{hyvarinen2024identifiability}. Additionally, even if \frl cannot guarantee identifiability, \frl may still provide a performance benefit over \erm, especially on datasets with a strong bias or high subgroup separability. In such cases, \erm models are more likely to rely on the bias shortcut and may suffer extreme performance degradation \citep{jonesRoleSubgroupSeparability2023a}. Given these challenges with theoretical analysis, we focus instead on a simpler and weaker concept of validity: does \frl practically attain better performance than \erm? We propose two hypotheses, which we will explore in \cref{sec:experimentsandresults}.

\begin{tcolorbox}[colback=white]
\begin{restatable}[]{hypothesis}{hypothesisbias} \label{hypothesis:bias}
    \frl validity under distribution shift depends on the underlying causal structure of the bias present at train-time.
\end{restatable}

\begin{restatable}[]{hypothesis}{hypothesissep} \label{hypothesis:sep}
    \frl validity under distribution shift depends on the amount of sensitive information initially present in the inputs (subgroup separability).
\end{restatable}
\end{tcolorbox}

\section{Experiments and results} \label{sec:experimentsandresults}

We support our theoretical analysis with a large-scale set of experiments on medical image data. We adapt the experimental setup from \citet{jonesRoleSubgroupSeparability2023a}, consisting of five datasets across the modalities of chest X-ray \citep[CheXpert, MIMIC;][]{irvinCheXpertLargeChest2019,johnsonMIMICCXRDeidentifiedPublicly2019}, dermatoscopy \citep[HAM10000, Fitzpatrick17k;][]{tschandlHAM10000DatasetLarge2018,grohEvaluatingDeepNeural2021,grohTransparencyDermatologyImage2022}, and fundus imaging \citep[PAPILA;][]{kovalykPAPILADatasetFundus2022}. Each dataset is associated with a binary disease classification task and binary sensitive attribute. Where datasets have multiple sensitive attributes available, they are treated separately, giving eleven dataset-attribute combinations. We treat the unaltered datasets as unbiased and generate biased variants of each dataset according to the mechanisms in \cref{fig:biasmechanisms}. In each bias mechanism, we inject bias into one subgroup (`Group 1') by either dropping samples, corrupting the image, or corrupting the label, whereas the other subgroup (`Group 0') is left uncorrupted. We provide details on each dataset, including the procedures to generate each biased variant, in \cref{sec:datadetails}. 

Our experiments compare subgroup-wise accuracy of \erm against a popular adversarial \frl method \citep{kimLearningNotLearn2019} and we repeat our analysis with a class-conditional \frl method \citep{zhaoConditionalLearningFair2019} in \cref{sec:extendedresults}. Both methods are representative of state-of-the-art in \frl\footnote{Benchmarking by \citep{zongMEDFAIRBenchmarkingFairness2023} found no statistically meaningful differences between \frl methods.}. For each dataset-attribute combination, we train each method on each dataset variant over five random seeds for a total of 660 training runs. We evaluate performance by considering the percentage-point difference in mean accuracy between \frl and \erm ($\Delta$ Acc) for each subgroup. For subgroup separability, we use the measurements reported for each dataset by \citet{jonesRoleSubgroupSeparability2023a}\footnote{\citet{jonesRoleSubgroupSeparability2023a} acquire subgroup separability measurements using test-time AUC of classifiers trained to predict the sensitive attribute. Since we use the same model class -- ResNet18 \citep{heDeepResidualLearning2016} -- these measurements are also appropriate for our experiments.}. Further hyperparameter, training, and model details can be found in \cref{sec:trainingdetails}.

\subsection{Verifying futility in the \iid performance paradigm (\texorpdfstring{\cref{theorem:iidfutility}}{Proposition 4.8})}

\cref{fig:iidfutility} plots the performance gap between \frl and \erm in the \iid case. The training and testing datasets are generated by randomly splitting the unbiased variant of each dataset. For all dataset--attribute combinations, $\Delta$ Acc is negative or approximately zero for both subgroups, supporting the finding in \cref{theorem:iidfutility} that \frl can only maintain or worsen performance in \iid settings.

\begin{figure}[ht]
    \centering
    \includegraphics[width=\linewidth]{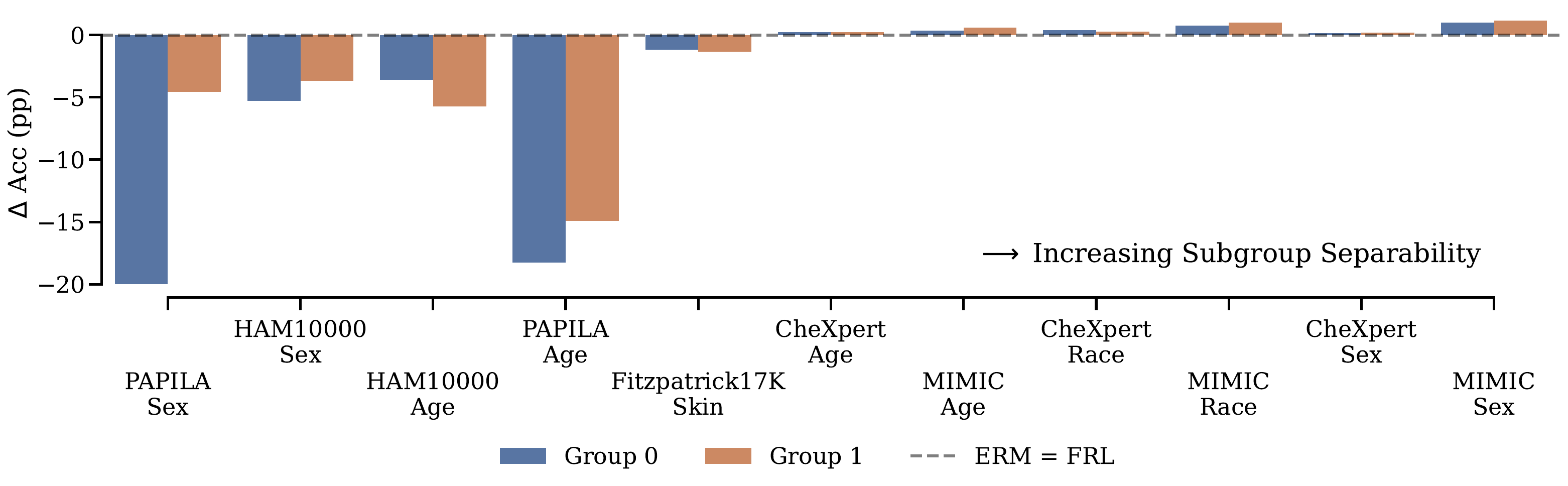}
    \caption{Percentage-point mean accuracy gap for \frl models compared to \erm models on \iid disease classification tasks (train/test unbiased). Positive $\Delta$ Acc means \frl outperforms \erm. Datasets are sorted by increasing subgroup separability on the x-axis.}
    \label{fig:iidfutility}
\end{figure}

Interestingly, the dataset--attribute combinations which suffered the most under \frl had the lowest subgroup separability, whereas the settings with better \frl performance had higher subgroup separability. This seems to indicate that when inputs encode sensitive information more strongly, \frl is better at removing it without affecting the primary task. Conversely, when sensitive information is more difficult to extract from the inputs, features relevant to the primary task may be more tightly entangled with those relevant to predicting sensitive attributes. In this case, attempting to remove features predictive of the sensitive attribute may degrade primary task performance more.

\subsection{Testing potential validity under causal shifts (\texorpdfstring{\cref{hypothesis:bias}}{Hypothesis 4.9})}

We now consider the performance of \frl under distribution shift, testing \cref{hypothesis:bias} that \frl performance depends on the underlying causal structure of the shift. \cref{fig:shiftbiashypothesis} plots the performance gap between \frl and \erm when trained on each bias mechanism and tested on an unbiased test set. We find that \frl performs best relative to \erm under presentation disparities, where it can boost performance for Group 1 (the disadvantaged group) in settings with high subgroup separability. In the other two bias mechanisms, \frl provides little benefit, providing evidence that the underlying causal structure of the bias matters for the practical validity of \frl.

\begin{figure}[ht]
    \centering
    \includegraphics[width=\linewidth]{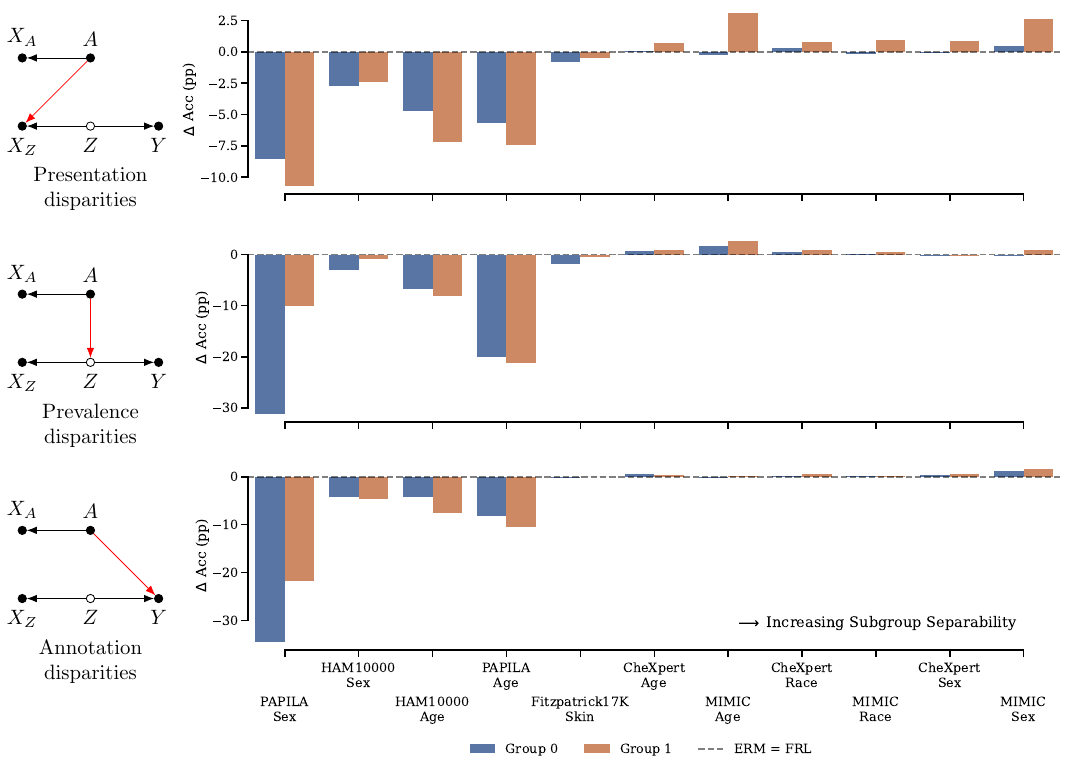}
    \caption{Percentage-point mean accuracy gap for \frl models compared to \erm models when trained on each mechanism of dataset bias (test set is always unbiased). Positive $\Delta$ Acc indicates that \frl outperforms \erm on the unbiased test set.}
    \label{fig:shiftbiashypothesis}
\end{figure}

\subsection{Testing potential validity as a function of subgroup separability (\texorpdfstring{\cref{hypothesis:sep}}{Hypothesis 4.10})}

Perhaps the most noticeable pattern in \cref{fig:shiftbiashypothesis} is how the performance gap varies strongly with separability, supporting \cref{hypothesis:sep} that the practical validity of \frl depends on subgroup separability. Across all bias mechanisms, \frl did not offer any improvements for datasets with low subgroup separability, similar to what has been observed in the unbiased settings. We investigate this further in \cref{fig:shiftsephypothesis}, which aggregates the results from \cref{fig:shiftbiashypothesis} over all three bias mechanisms, using the subgroup separability AUC from \citet{jonesRoleSubgroupSeparability2023a} as the x-axis. Our results indicate that there is clear correlation between subgroup separability and empirical validity of \frl. 

\begin{figure}[ht]
    \centering
    \includegraphics[width=\linewidth]{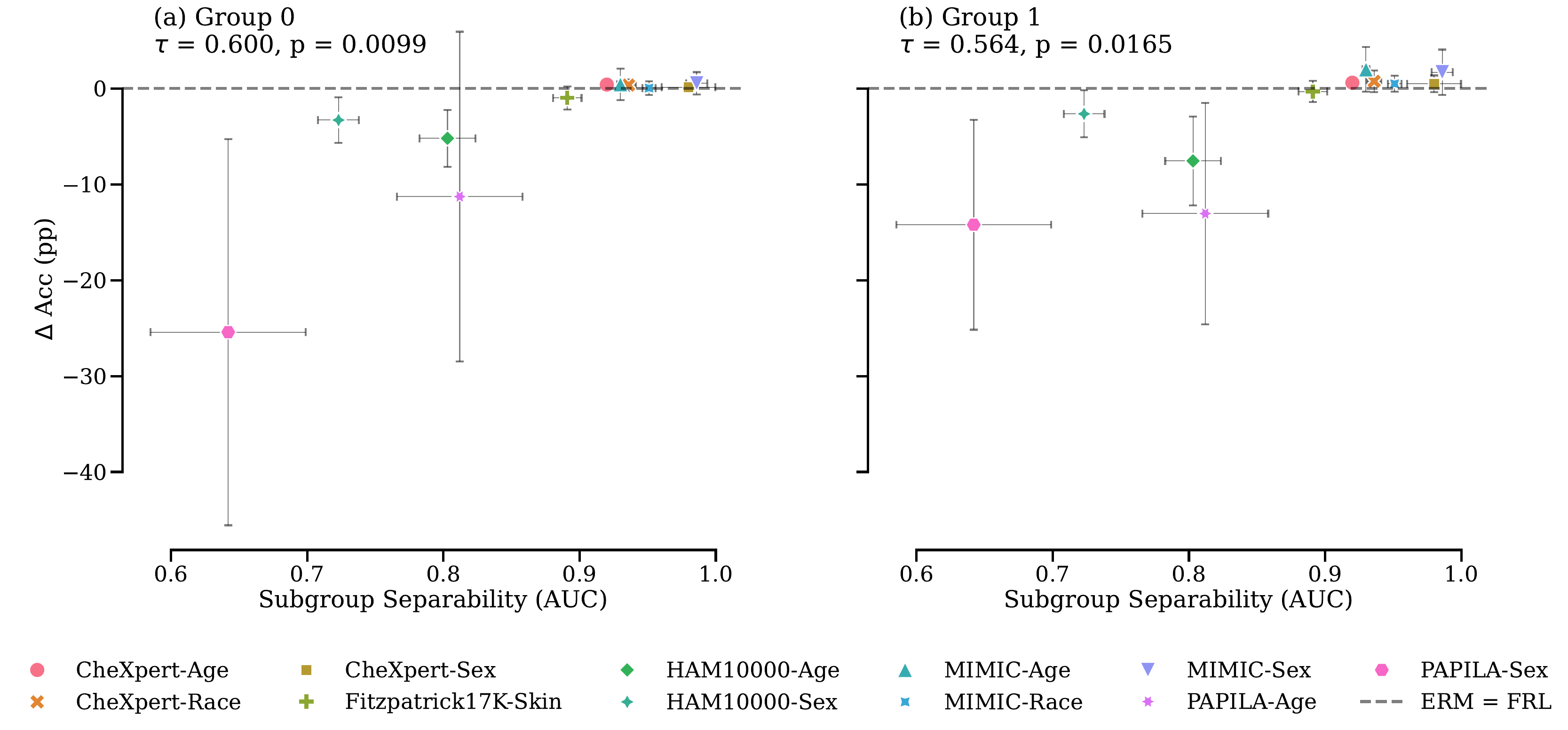}
    \caption{Percentage-point mean accuracy gap for \frl models compared to \erm models, aggregated over all bias mechanisms and plotted against subgroup separability AUC, as reported by \citet{jonesRoleSubgroupSeparability2023a}. Positive $\Delta$ Acc indicates that \frl outperforms \erm on the unbiased test set. We use Kendall's $\tau$ statistic to test for a monotonic association between $\Delta$ Acc and subgroup separability. $y$-axis error bars represent standard deviations of the aggregated $\Delta$ Acc measurements. $x$-axis error bars represent standard deviations in subgroup separability measurements.}
    \label{fig:shiftsephypothesis}
\end{figure}

\cref{fig:shiftsephypothesis} makes the dependence of \frl validity on subgroup separability clear, demonstrating a statistically significant monotonic association between $\Delta$ Acc and subgroup separability. On dataset-attribute combinations with high subgroup separability, \frl improves performance relative to \erm for the disadvantaged group (Group 1) whilst maintaining performance for other the group. In settings with low separability, \frl substantially worsens performance for both groups.

\section{Discussion}

By organising the related literature into three paradigms of fairness analysis in \cref{sec:background}, our work helps to untangle confusion across previous work stemming from multiple conflicting evaluation paradigms and implicit assumptions about what is considered fair. Our causal treatment of dataset bias in \cref{sec:causalstructures} shines a light on how the structure of the underlying distribution is key to reasoning about fairness, directly motivating our theoretical and empirical results in \cref{sec:rethinkingfrl} and \cref{sec:experimentsandresults}. We discuss three insights from our work and potential directions for the field.

\paragraph{\frl is not a useful fairness strategy for performance-sensitive \iid tasks} 

\cref{theorem:iidfutility} states that if we are to apply fair representation learning on \iid benchmarks, we must implicitly drop one of the effectiveness or harmlessness criteria. Which criterion we lose depends on whether our data is biased or unbiased according to \cref{def:unbiaseddata}. 

On unbiased data, \cref{lemma:effectiveness} shows that we must drop the effectiveness criterion, so \frl provides no fairness benefit over \erm, which would not encode sensitive information anyway (provided \cref{assumption:compactness} holds, as discussed in \cref{sec:proofs}). Furthermore, there is no reason for one to implement \frl (or indeed any bias mitigation method) if they were confident that they had an unbiased dataset. 

On biased data, \cref{lemma:harmlessness} shows how we lose the harmlessness criterion and should expect overall test-time performance to degrade relative to \erm methods. One interpretation of this result is that group fairness metrics such as demographic parity are not aligned with minimax fairness \citep{martinezMinimaxParetoFairness2020} under dataset bias; thus, \cref{lemma:harmlessness} may be seen as a general impossibility result. It is complementary to \citet{pfohlUnderstandingSubgroupPerformance2023}, who investigate whether Bayes-optimal classifiers satisfy equalised odds under causal structures of dataset bias. Note that \cref{lemma:harmlessness} does not provide bounds for the amount of performance degradation; these may be derived for narrower settings with assumptions on the bias mechanisms \citep{zhaoInherentTradeoffsLearning2019,zhaoFundamentalLimitsTradeoffs2022}.

In this light, recent results from real-world evaluations \citep[e.g.][]{pfohlEmpiricalCharacterizationFair2021,zietlowLevelingComputerVision2022,zhangImprovingFairnessChest2022,zongMEDFAIRBenchmarkingFairness2023}, showing that \frl methods worsen performance for all groups, are unsurprising and may be viewed as fairness-performance tradeoffs. Real-world datasets typically have some amount of pre-existing bias, and most evaluations are \iid because the train/test sets are generated via random splitting. We should not expect \frl methods to achieve state-of-the-art performance in these cases, and we caution against enforcing invariant representations if evaluation and deployment settings are expected to be \iid to training. \frl methods should not be used `blindly'. 

\paragraph{Statistical and causal considerations affect the validity of \frl under distribution shift} By taking a fine-grained approach, our work proposes -- and provides empirical evidence for -- two statistical and causal factors that are rarely considered in fairness analysis (Hypotheses \ref{hypothesis:bias} \& \ref{hypothesis:sep}).

Our results in \cref{sec:experimentsandresults} demonstrate how the empirical validity of \frl in the distribution shift paradigm depends on both the causal structure of the bias \textit{and} the amount of sensitive information present to begin with (subgroup separability). We found that \frl methods could only improve performance on an unbiased test set relative to \erm when trained on datasets with presentation disparities and high subgroup separability. When trained on other bias mechanisms or on data with lower subgroup separability, \frl consistently degraded performance relative to \erm. Particularly notable was the magnitude of the performance degradation as subgroup separability decreased. 

We argue that further theoretical work to understand the precise relationship between dataset bias, subgroup separability, and generalisation performance of \erm and \frl under distribution shift will be a particularly productive area of study moving forward. We provide an extended discussion on connections to domain generalisation and potential directions for future work in \cref{sec:example_bias_mechanism}.

\paragraph{Real-world evaluation of \frl remains challenging} Finally, we emphasise that real-world evaluation of fairness methods under the distribution shift paradigm remains a challenge. Proper evaluation of \frl under distribution shift requires training on a biased dataset and testing on an unbiased one, but it is tough to find real-world data which satisfy these criteria; we rarely have full knowledge of the biases, and if we had access to an unbiased dataset, we could use it for training without needing \frl.

To overcome this obstacle, some work \citep[e.g.][]{kimLearningNotLearn2019,tartaglioneEnDEntanglingDisentangling2021} leverages synthetic data with known biases. Others \citep[e.g.][and our experiments in \cref{sec:experimentsandresults}]{wangFairnessVisualRecognition2020} take the alternative approach of injecting bias into real-world data. However, both approaches are unlikely to perfectly simulate the true complexity of real-world biases. Until we better understand the causal and statistical nature of real-world bias, proper evaluation of fairness methods will remain difficult. Other disciplines have a long history of using standardised research protocols and reporting guidelines (e.g. for clinical trials). It may be time to consider similar strategies for planning and assessing research advances on the frontiers of machine learning.

\subsubsection*{Acknowledgements}

C.J. is supported by Microsoft Research, EPSRC, and The Alan Turing Institute through a Microsoft PhD scholarship and a Turing PhD enrichment award. M.R. is supported by an Imperial College London President's PhD scholarship and a Google PhD fellowship. B.G. received support from the Royal Academy of Engineering as part of his Kheiron/RAEng Research Chair. B.G. and F.R. acknowledge the support of the UKRI AI programme, and the Engineering and Physical Sciences Research Council, for CHAI - EPSRC Causality in Healthcare AI Hub (grant number EP/Y028856/1).

% \bibliography{refs}
% \bibliographystyle{iclr2025_conference}
\printbibliography

\newpage
\appendix
\section{Appendix}

\subsection{Acronyms and notation} \label{sec:notation}
\ra{1.2}

\begin{tabular}{@{}ll@{}}\toprule
\textbf{\iid} & independent and identically distributed. \\
\textbf{\erm} & empirical risk minimisation. \\
\textbf{\scm} & structural causal model. \\
\textbf{\frl} & fair representation learning. \\
\\
$X$ & random variable. \\
$\mathbf{X}$ & random vector. \\
$x$ & scalar realisation of random variable $X$. \\
$\mathbf{x}$ & vector realisation of random vector $\mathbf{X}$. \\
$\mathfrak{C}$ & structural causal model. \\
$P^\mathfrak{C}$ & probability distribution induced by $\mathfrak{C}$. \\
$X \indep[\mathfrak{C}] Y$ & $X$, $Y$ are statistically independent in the distribution induced by $\mathfrak{C}$. \\
$I^\mathfrak{C}(X;Y)$ & mutual information between $X,Y$ in the distribution induced by $\mathfrak{C}$.\\
\bottomrule\end{tabular}

\subsection{Extended discussion}\label{sec:example_bias_mechanism}

We provide an extended discussion, adding depth to areas that some readers -- particularly those interested in building on this work -- may find interesting. Some elements of this section are adapted from conversations during the review process. We thank the anonymous reviewers for spurring us to think about these topics.

\subsubsection*{Worked examples of bias mechanisms}

To further illustrate the bias mechanisms in \cref{sec:causalstructures}, consider a medical example where we wish to classify some disease $Y$ from chest X-ray images $X$, with biological sex as a sensitive attribute:

\begin{itemize}
    \item A prevalence disparity may take the form of a shift in the marginal distribution of $Y$ across groups. For example, there may be a greater proportion of positive males in the dataset than positive females due to some combination of physiological differences, demographic differences, historical disparities in healthcare, etc.
    
    \item A presentation disparity may occur if there is a shift in the generative process $P(\mathbf{X} \mid Y)$; for example, one group may be systematically diagnosed later in their disease progression, leading to the same condition appearing more severe or with different pathological features.

    \item An annotation disparity is when there is a shift in the diagnostic mapping $P(Y \mid Z)$. This may occur if different groups are annotated with different policies, e.g. due to historical healthcare disparities or diagnosis practices at different hospitals.
\end{itemize}   

Each of these mechanisms would cause an \erm-trained classifier to rely on sensitive information when trained to predict disease from chest X-rays. Importantly, for all cases, a domain expert would need to examine the causes of each disparity. If the association is deemed spurious or unfair (as we assume throughout this paper), then the disparity should be mitigated. If this association is not spurious, then it may contain potentially useful information that a predictive model should leverage.

\subsubsection*{On the choice of causal structures}

The bias mechanisms in \cref{fig:biasmechanisms} are derived by applying the d-separation criterion \citep{vermaCausalNetworksSemantics1990} to find the simplest fundamental graphs which violate \cref{def:unbiaseddata}. They are based on the causal structures proposed by \citet{jonesCausalPerspectiveDataset2024}, who also provide practical examples justifying their applicability to real-world settings. Notably, when there is no possibility of label noise (i.e. $Y \defeq Z$), these structures collapse to the familiar anticausal setup (i.e. $\mathbf{X} \leftarrow Z \rightarrow Y$ becomes $\mathbf{X} \leftarrow Y$). 

These structures may thus be seen as generalisations of previously studied anticausal bias mechanisms, such as those from \citet{singhFairnessViolationsMitigation2021} and \citet{makarFairnessRobustnessAnticausal2022}, and so results that are valid for our setup should be valid for the anticausal case in general. Similar structures have also been applied in the robustness and distribution shift literature \citep{veitchCounterfactualInvarianceSpurious2021a,jiangInvariantTransportableRepresentations2022}.

We also note that, while we use the structures in \cref{fig:biasmechanisms} in our setup and to motivate the biases in \cref{sec:experimentsandresults}, our futility result in \cref{theorem:iidfutility} is more general. \cref{theorem:iidfutility} relies only on \cref{def:unbiaseddata} and the causal decomposition of $\mathbf{X}$ into $\{X_A, X_Z \}$. We thus emphasise that the class of problems for which \frl is futile is much larger than the causal structures explicitly enumerated in \cref{sec:causalstructures}. See \cref{fig:moremechanisms} for two further relevant examples.

\begin{figure}[ht]
    \centering
    \includegraphics[width=0.5\linewidth]{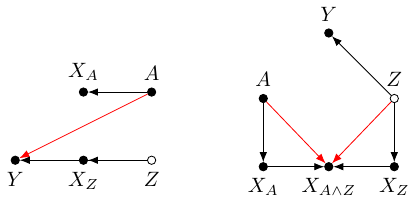}
    \caption{Two further examples of bias mechanisms for which \cref{theorem:iidfutility} applies to. Left is a causal structure (i.e. $\mathbf{X} \rightarrow Y$), where different groups with the same $X_Z$ features are annotated differently. Right includes an interaction feature $X_{A \land Z}$, acting as a collider for $A$ and $Z$. Any model that implicitly conditions on the $X_{A \land Z}$ feature will see a spurious correlation between $A$ and $Z$. }
    \label{fig:moremechanisms}
\end{figure}

\subsubsection*{Connections to counterfactual fairness}

Our work focuses on statistical notions of performance and fairness, as these are most commonly evaluated by the community and are explicitly targeted by \frl methods. This is closely related to causal notions of fairness such as counterfactual fairness \citep{rosenblattCounterfactualFairnessBasically2022,anthisCausalContextConnects2023}, however, there are some subtleties to this connection \citep{silva2024counterfactual}. In many cases -- but notably, not all \citep{silva2024counterfactual} -- counterfactual fairness implies demographic parity, and in such situations, \cref{theorem:iidfutility} also applies to counterfactually fair \frl predictors. Similarly, conditional \frl enforces equal opportunity, which can be implied by extensions of counterfactual fairness such as path-specific counterfactual fairness \citep{chiappaPathSpecificCounterfactualFairness2019}, allowing similar results to be derived.

\subsubsection*{Connections to domain generalisation}

On one level, \frl and domain generalisation/adaptation techniques (especially methods based on adversarial training and disentanglement) share many similarities. Often, the problem setup in both of these fields is very similar, with fairness using a ‘sensitive attribute’ and domain generalisation using a ‘domain’ or ‘environment’ variable. This similarity gives us hope that our results may be insightful in fields beyond fairness, however, we do not consider such claims within the scope of this paper.

A key point to note is that in our formulation, there are two simultaneous shifts: a disparity across groups (e.g. prevalence, presentation, or annotation disparities, as enumerated in \cref{fig:biasmechanisms}) and a potential shift across train/test domains (i.e. training on biased data and testing on unbiased data in the distribution shift paradigm). We can thus relate our problem to the traditional distribution shift setup by extending the framework of \citet{federiciInformationShifts2021} for instance.

To illustrate this, consider the joint distribution over training and testing datasets by using the binary indicator variable $T$ to distinguish between them. We can now decompose the shift across groups and domains like so:
\begin{align}
    I(\mathbf{X}, Y, A;T) = I(\mathbf{X};T) + I(Y; T \mid \mathbf{X}) + I(A; T \mid \mathbf{X},Y).
\end{align}
In this formulation, the LHS term represents the overall distribution shift, and the terms on the RHS represent covariate shift, label shift, and attribute shift, respectively. If the selection yields no information about the joint distribution then the training and test distributions are \iid, i.e. $I(\mathbf{X}, Y, A;T) = 0$. Different factorisations of this joint mutual information imply different data-generating processes and correspond to the various shifts shown in \cref{fig:biasmechanisms}.  Furthermore, different selection effects can be represented by the functional relation between $(\mathbf{X},Y,A)$ and the selection variable $T$.

For instance, an attribute-based selection effect could be represented by the causal mechanism $T \defeq f(A,N)$, where $N$ is an exogenous noise variable. There are various other combinations possible, including multivariate selection effects $T \defeq f(\mathbf{X},Y,A,N)$, or ones consisting only of exogenous noise $T \defeq f(N)$, which would represent the unbiased setting. 

\citet{federiciInformationShifts2021} derive some practical upper bounds on the (latent) concept shift quantity which, under some reasonable assumptions, are guaranteed to minimise concept shift. In our view, deriving practical bounds for other types of distribution shifts of the sort studied in the present work and beyond constitutes fertile ground for future research.

\subsection{Proofs and discussion of assumptions} \label{sec:proofs}

\frllemma*
\begin{proof}
    The result follows from the causal decomposition in \cref{sec:causalstructures}, where $X_A \notindep A$ by definition. Now, let $\mathbf{X} = \{X_Z, X_A\}$, $\mathbf{X}^* \subseteq \mathbf{X}$, and ${R^\mathrm{FRL} = f_\theta(\mathbf{X}^*)}$:
    \begin{align*}
    R^\mathrm{FRL} \indep[\mathfrak{C}_{\mathrm{tr}}] A &\implies X_A \not \in \mathbf{X}^* \implies \mathbf{X}^* \subseteq \mathbf{X} \backslash \{X_A\} \implies R^\mathrm{FRL} = f_\theta(X_Z) . && \qedhere
  \end{align*}
\end{proof}

\ermlemma*
\begin{proof}
    This result is a straightforward consequence of our causal decomposition in \cref{sec:causalstructures}, combined with \cref{assumption:compactness}. Let $\mathbf{X} = \{X_Z, X_A\}$, and $\mathbf{X}^* \subseteq \mathbf{X}$, s.t. ${Y \indep[\mathfrak{C}_{\mathrm{tr}}] \mathbf{X} \backslash \mathbf{X}^* \mid \mathbf{X}^*}$. Through simple manipulation, we get that
    \begin{align*}
     Y \indep[\mathfrak{C}_{\mathrm{tr}}] X_A \mid X_Z &\iff \mathbf{X}^* = \{X_Z \}, \quad  \mathbf{X} \backslash \mathbf{X}^* = \{X_A\}; \\
      &\iff R^\mathrm{ERM} = f_\theta(X_Z) \quad \text{(Assumption \ref{assumption:compactness})};\\
      &\iff R^\mathrm{ERM} \indep[\mathfrak{C}_{\mathrm{tr}}] A. && \qedhere
  \end{align*}
\end{proof}

\effectivenesslemma*
\begin{proof}
    This result follows from \cref{def:effectiveness} and \cref{lemma:unconstrainedrepresentations}. Begin by noticing that \cref{eq:effectiveness} implies the following independence statement:
    \begin{align*}
        \mathcal{E} &\implies I^{\mathfrak{C}_{\mathrm{tr}}}(A ; R^{\mathrm{ERM}}) > 0 \implies R^\mathrm{ERM} \notindep[\mathfrak{C}_{\mathrm{tr}}] A .   \label{eq:effectivenesstranslation}
    \end{align*}
Since this is the logical negation of the LHS of \cref{eq:unconstrainedrepresentationslemma}, it follows that the RHS must also be negated when effectiveness is satisfied due to the logical equivalence of the sides ($\iff$). Thus, effectiveness implies bias at train time.
\end{proof}

\harmlessnesslemma*
 \begin{proof}
    This result follows from \cref{def:harmlessness} and \cref{lemma:fairrepresentations}. Starting from the definition of harmlessness, decompose the RHS expression using the chain rule of mutual information:
    \begin{align*}
       \mathcal{H} \iff I^{\mathfrak{C}_{\mathrm{te}}}(Y ; R^{\mathrm{FRL}}) &= I^{\mathfrak{C}_{\mathrm{te}}}(Y; X_Z, X_A), \\
       \iff I^{\mathfrak{C}_{\mathrm{te}}}(Y ; R^{\mathrm{FRL}}) &= I^{\mathfrak{C}_{\mathrm{te}}}(Y; X_Z) + I^{\mathfrak{C}_{\mathrm{te}}}(Y; X_A \mid X_Z).
   \end{align*} 
   From \cref{lemma:fairrepresentations}, recall that ${R^\mathrm{FRL} = f_\theta(X_Z)}$. Now we may apply the data processing inequality ${I^{\mathfrak{C}_{\mathrm{te}}}(Y ; f_\theta(X_Z)) \leq I^{\mathfrak{C}_{\mathrm{te}}}(Y; X_Z)}$ and nonnegativity of mutual information to see that an unbiased test set is necessary (but not sufficient) for harmlessness:
   \begin{align*}   
       \mathcal{H} \implies I^{\mathfrak{C}_{\mathrm{te}}}(Y; X_A \mid X_Z) = 0 \implies Y \indep[\mathfrak{C}_{\mathrm{te}}] X_A \mid X_Z. 
       \tag*{\qedhere}
   \end{align*}
\end{proof}

\paragraph{What if \cref{assumption:compactness} is violated?}

\cref{assumption:compactness} is needed to define what information a properly trained \erm model relies on and is used in the proofs of \cref{lemma:unconstrainedrepresentations,lemma:effectiveness}. If we reject \cref{assumption:compactness}, we get the (rather unintuitive) result that an \erm model may rely on sensitive information even when trained on an unbiased dataset where such information provides no predictive power. In practice, this may occur if the model is underfit or has insufficient training data. In this case, the \frl strategy may have some use for unbiased \iid settings. By constraining the solution space, it may be possible for \frl to improve sample efficiency during training, analogous to a regulariser or inductive prior. It is unclear, however, whether this scenario is particularly relevant with today's practice of high-capacity models trained to convergence on large datasets \citep{zhangUnderstandingDeepLearning2016}.

\paragraph{Why the strict inequality in \cref{def:effectiveness}?} Applying \cref{assumption:compactness}, we can derive that ${I^{\mathfrak{C}_{\mathrm{tr}}}(A ; R^{\mathrm{ERM}}) = 0 \iff Y \indep[\mathfrak{C}_{\mathrm{tr}}] X_A \mid X_Z }$, that is, unconstrained representations encode no sensitive information if and only if the training data is unbiased. In this case, we define \frl as (trivially) ineffective since it cannot provide any fairness benefit over \erm, which would not encode sensitive information anyway. This case is unlikely to be particularly common since it is unclear why any researcher would apply \frl methods to a dataset that they are confident is unbiased.

\subsection{Dataset details} \label{sec:datadetails}

The datasets were all preprocessed and split using the same procedure as \citep{jonesRoleSubgroupSeparability2023a}, who also report summary statistics. For each dataset, the disease prediction task was constructed by binning all available disease labels (e.g. pneumonia, glaucoma) into the positive class. Other labels (e.g. no-finding) were binned into the negative class. Binary subgroup labels for `Group 0' and `Group 1' were constructed according to the following criteria:

\begin{itemize}
    \item When the sensitive attribute is sex: `Male' = `Group 0', `Female' = `Group 1'.
    \item For race: `White' = `Group 0', `Non-White' (all other labels) = `Group 1'.
    \item For age: $<60$ = `Group 0', $\geq 60$ = `Group 1'.
    \item For skin type (Fitzpatrick scale): I--III = `Group 0', IV--VI = `Group 1'. 
\end{itemize}

To generate the biased variants of each dataset, we implemented the following procedure:

\begin{itemize}
    \item Presentation disparities: 50\% of positive individuals in `Group 1' have the image corrupted by reducing sharpness\footnote{\texttt{torchvision}==0.18.1 \texttt{adjust\_sharpness} implementation, with \texttt{sharpness\_factor} = 0.5.}.
    \item Prevalence disparities: 50\% of positive individuals in `Group 1' are dropped from the dataset.
    \item Prevalence disparities: 50\% of positive individuals in `Group 1' are mislabeled as negative.
\end{itemize}

\subsection{Training details} \label{sec:trainingdetails}

Training consisted of two phases: an initial hyperparameter tuning phase, followed by a final sweep with fixed hyperparameters (the latter phase generated the results reported in \cref{sec:experimentsandresults}). In the tuning sweep, the methods were trained and evaluated across all datasets. The final hyperparameters were selected by considering combinations for which training successfully converged across all datasets and achieved the best performance. When selecting adversarial coefficients for the two \frl methods, we ensured that the accuracy of the adversarial prediction head did not exceed the approximate prevalence of the subgroups. This was to prevent the selection of hyperparameter values that would result in the adversary being ignored. The final hyperparameters used are reported in \cref{tab:hyperparams}.

\begin{table}[ht] 
\caption{Hyperparameters used across all runs in \cref{sec:experimentsandresults}.}\label{tab:hyperparams}
\ra{1.2}
\centering \small
\begin{tabular}{@{}lll@{}}
\toprule
 \textbf{Config} & \phantom{ab} & \textbf{Value} \\ \midrule
 \textbf{Architecture} & & ResNet18 \citep{heDeepResidualLearning2016} \\
 \textbf{Optimiser} & & AdamW \citep{loshchilovDecoupledWeightDecay2018} \{lr: $1e-4$, $\beta_1$: $0.9$, $\beta_2$: $0.999$\} \\
 \textbf{Adversarial coefficients} & & \{Marginal \frl: 1.0, Conditional \frl: 0.05\}\\
 \textbf{LR Schedule} & & Constant \\
 \textbf{Max Epochs} & & $50$ \\
 \textbf{Early Stopping} & & \{Monitor: worst group AUC, Patience: $5$ epochs\}\\
 \textbf{Augmentation} & & RandomResizedCrop, RandomRotation($15^o$) \\
 \textbf{Batch Size} & & 256 (32 for PAPILA) \\ 
 \bottomrule
\end{tabular}
\end{table}

\subsection{Conditional \frl results} \label{sec:extendedresults}

We include the results of our extended experiments using a conditional \frl implementation \citep{zhaoConditionalLearningFair2019}. Notice that the results demonstrate the same trends as the main results in \cref{sec:experimentsandresults}.
\begin{figure}[ht]
    \centering
    \includegraphics[width=\linewidth]{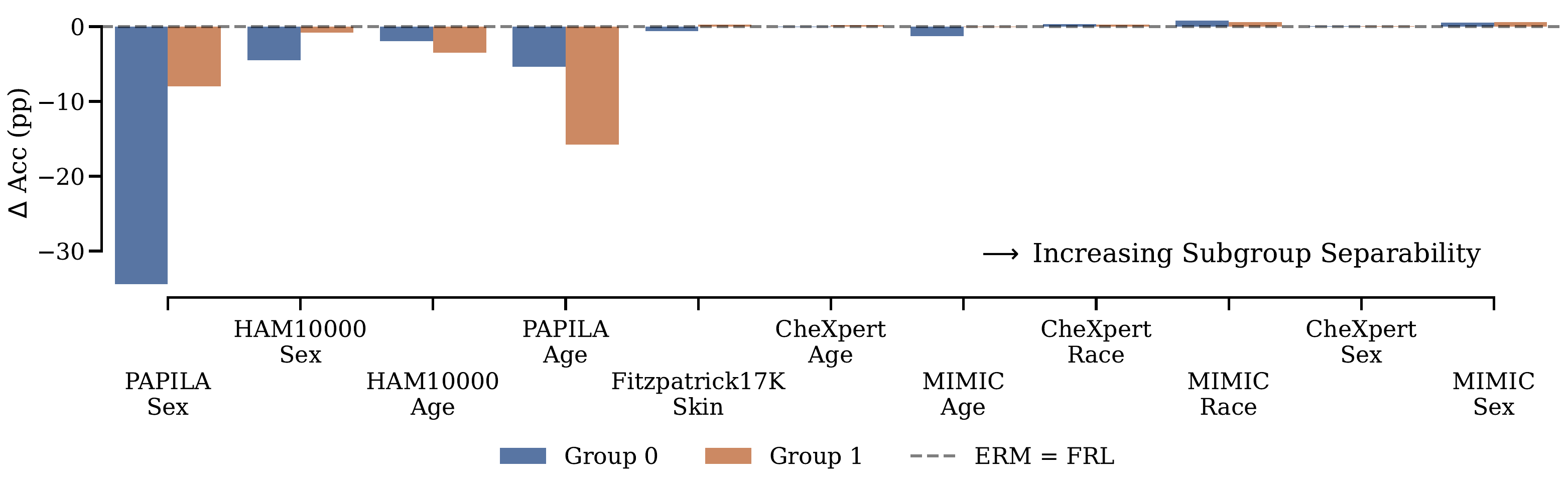}
    \caption{Percentage-point mean accuracy gap for conditional \frl models compared to \erm models on \iid disease classification tasks (train/test unbiased). Positive $\Delta$ Acc means \frl outperforms \erm.}
\end{figure}
\begin{figure}[ht]
    \centering
    \includegraphics[width=\linewidth]{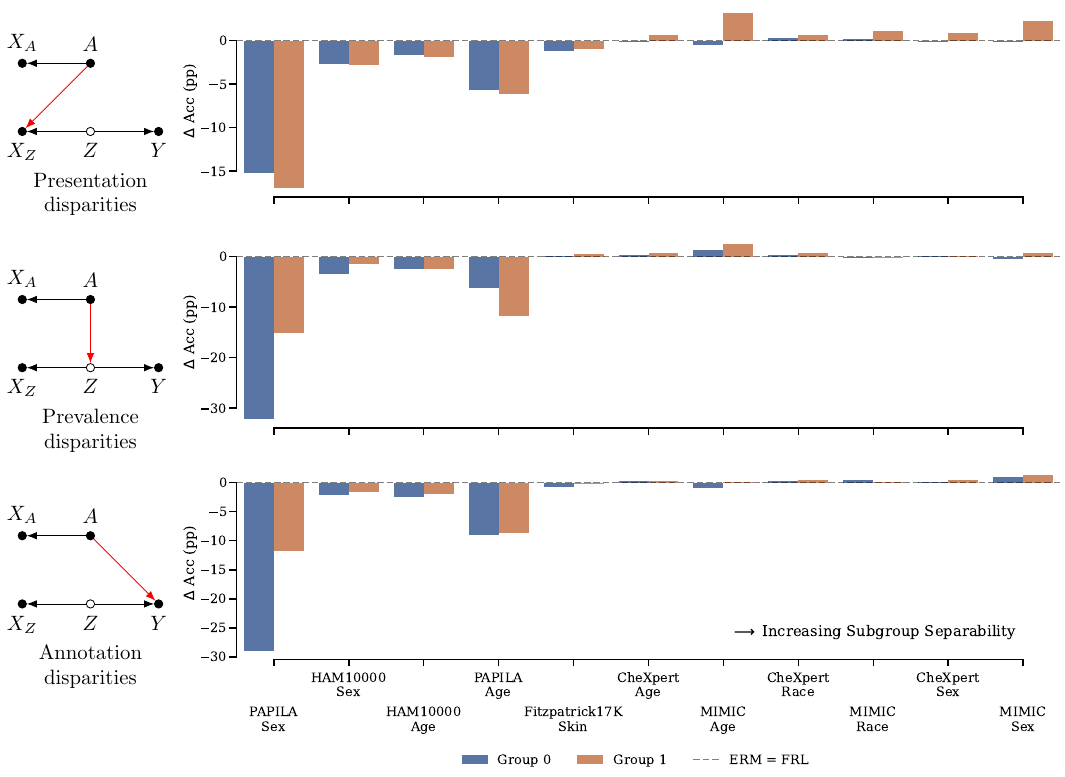}
    \caption{Percentage-point mean accuracy gap for conditional \frl models compared to \erm models when trained on each mechanism of dataset bias (test set is always unbiased). Positive $\Delta$ Acc indicates that \frl outperforms \erm on the unbiased test set.}
\end{figure}
\begin{figure}[ht]
    \centering
    \includegraphics[width=\linewidth]{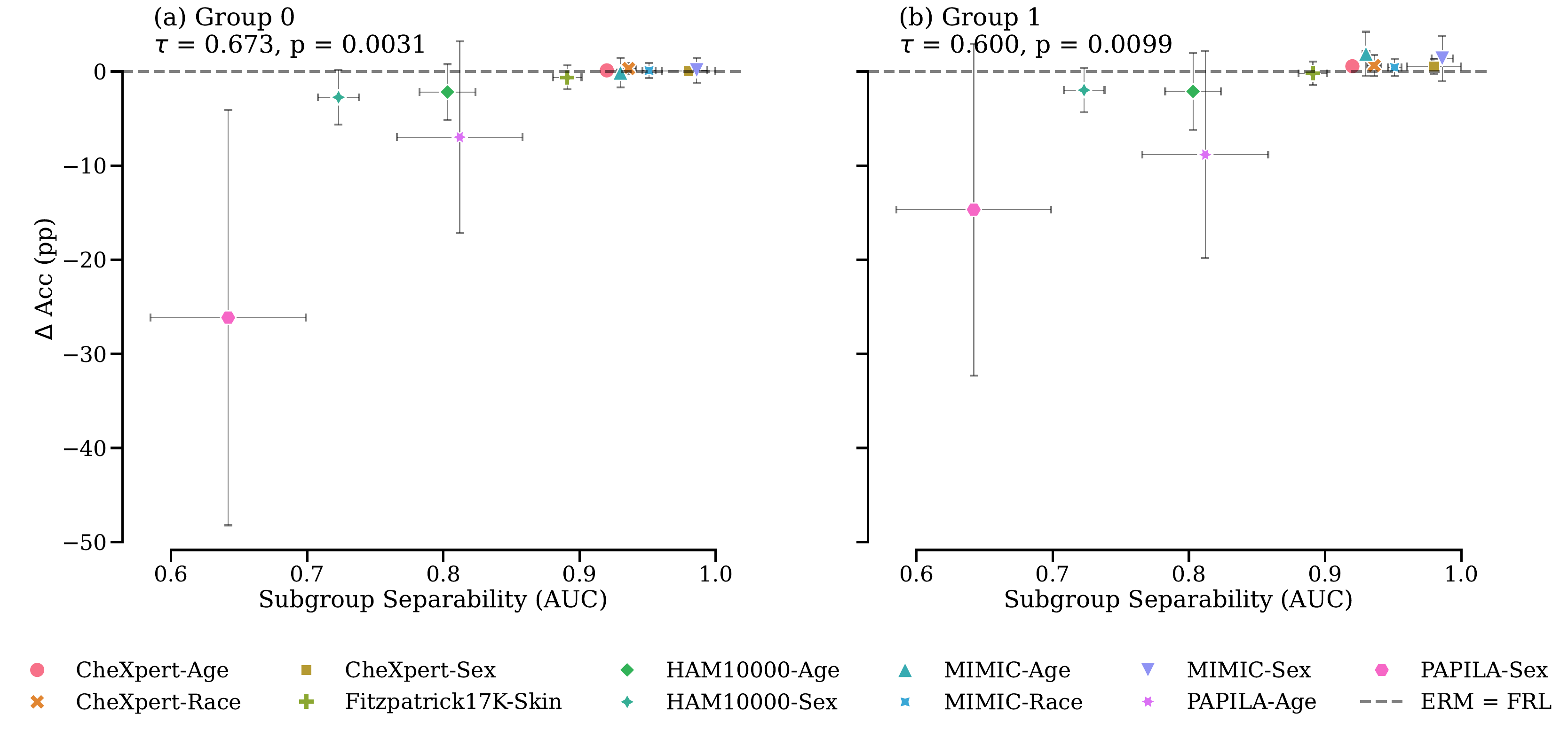}
    \caption{Percentage-point mean accuracy gap for conditional \frl models compared to \erm models, aggregated over all bias mechanisms and plotted against subgroup separability AUC, as reported by \citet{jonesRoleSubgroupSeparability2023a}. Positive $\Delta$ Acc indicates that \frl outperforms \erm on the unbiased test set. We use Kendall's $\tau$ statistic to test for a monotonic association between $\Delta$ Acc and subgroup separability.}
\end{figure}

\end{document}